\algrenewcommand\textproc{}% Used to be \textsc
\newcommand{\ABAplus}{ABA\textsuperscript{+} }
\newcommand{\stcomp}[1]{\overline{#1}}
\newcommand{\assump}{\mathcal{A}}
\newcommand{\lang}{\mathcal{L}}
\newcommand{\contrary}{\mathcal{^-}}
\newcommand{\rules}{\mathcal{R}}
\newtheorem{theorem}{Theorem}[section]
\newtheorem{example}{Example}[section]
\newtheorem{definition}{Definition}[section]
\newtheorem{proof}{Proof}[section]
\def\hb{\hbox to 10.7 cm{}}
\begin{document}

\pagestyle{headings}
\def\thepage{}

\begin{frontmatter}              % The preamble begins here.

%\pretitle{Pretitle}
\title{Preference Elicitation in Assumption-Based Argumentation \thanks{This  work  was supported by EPSRC grant EP/P011829/1, Supporting Security Policy with Effective Digital Intervention (SSPEDI)}}

% \markboth{}{April 2020\hb}
%\subtitle{Subtitle}

\author[A]{\fnms{Quratul-ain} \snm{Mahesar}},
\author[B]{\fnms{Nir} \snm{Oren}}
and
\author[B]{\fnms{Wamberto W.} \snm{Vasconcelos}}

% \runningauthor{Q. Mahesar et al.}
\address[A]{Department of Informatics, King's College London,
 UK}
\address[B]{Department of Computing Science, University of Aberdeen, Aberdeen, UK}

\begin{abstract}
Various structured argumentation frameworks utilize preferences as part of their standard inference procedure to enable reasoning with preferences. In this paper, we consider an inverse of the standard reasoning problem, seeking to identify what preferences over assumptions could lead to a given set of conclusions being drawn. We ground our work in the Assumption-Based Argumentation (ABA) framework, and present an algorithm which computes and enumerates all possible sets of preferences over the assumptions in the system from which a desired conflict free set of conclusions can be obtained under a given semantic. After describing our algorithm, we establish its soundness, completeness and complexity.
\end{abstract}

\begin{keyword}
Preferences\sep argumentation\sep
reasoning
\end{keyword}
\end{frontmatter}
% \markboth{January 2020\hb}{January 2020\hb}
%\thispagestyle{empty}
%\pagestyle{empty}

\section{Introduction}
% Argumentation plays a key role within Artificial Intelligence (AI). 
% It has been widely used for handling inconsistent knowledge bases~\cite{BESNARD2001203,SIMARI1992125}, and dealing with uncertainty in decision making~\cite{AMGOUD2009413,Bonet:1996}. 
% Abstract argumentation --- exemplified by Dung's abstract argumentation frameworks~\cite{Dung95onthe} --- treats arguments as atomic entities and interactions between arguments as binary relations, resulting in a (directed) argumentation graph. Acceptable or justified sets of arguments called \emph{extensions} can then be computed from this graph.
% Structured argumentation~\cite{Besnard2014-BESITS} considers how arguments and argument interactions are derived from a knowledge base, thereby considering an argument's internal structure. 
% In this work, we focus on 
Assumption-based argumentation (ABA)~\cite{aba_Dung2009,aba_Toni14} is a structured argumentation formalism where knowledge is represented through a deductive system comprising of a formal language and inference rules. Defeasible information is represented in the form of special entities in the language  called \emph{assumptions}. Attacks are then constructively defined over sets of assumptions whenever one set of assumptions supports the deduction of the contrary of some assumption in a different set of assumptions. ABA is equipped with different semantics for determining `winning' sets of assumptions. In turn, sets of `winning' (aka acceptable) arguments can be determined from the `winning' sets of assumptions, since the assumption-level and argument-level views are fully equivalent in ABA.

\ABAplus~\cite{CyrasT16} extends ABA with preferences to assist with discrimination among conflicting alternatives. Assumptions are the only defeasible component in ABA and therefore preferences are defined over assumptions rather than arguments as is the case in preference-based argumentation frameworks (PAFs)~\cite{AMGOUD2014585}. Unlike PAFs --- that use attack reversal with preferences given over arguments --- \ABAplus uses attack reversal with preferences given over assumptions (at the object level), that form the support of arguments. Preferences are integrated directly into the attack relation, without lifting preferences from the object level to either the argument or the extension levels (as is done in systems such as ASPIC\textsuperscript{+}~\cite{ModgilP14}).

ABA semantics allow one to compute an acceptable or winning set of arguments, and the semantics of \ABAplus  extend ABA semantics to take  preferences over assumptions into account. In this paper we focus on the problem of eliciting the preferences which unify an ABA framework with its \ABAplus counterpart, seeking to answer the following questions.
\setlist{nolistsep}
\begin{enumerate}[noitemsep]
%\begin{inparaenum}[(1)]
%\item What are the categories of preferences in terms of the attack relation?
\item What are the possible preferences over assumptions for a given conflict-free extension in an ABA framework that ensure the acceptability of the extension using \ABAplus semantics?
\item How do we find a combination of all possible sets of such preferences?
\item What are the unique and common preferences for a given extension compared to all other extensions (under a multi-extension semantics)?
%\item What is the computational complexity of finding all possible sets of preferences?
%\item How do we ensure that wrong preferences are not computed?
%\end{inparaenum}
\end{enumerate}

The following example suggests one application for these ideas.
%For an illustration of these ideas, consider the following example.
% \noindent \textbf{Example 1} (Journey Recommendation) Alice uses an online journey recommendation system to find  options for travelling between two cities. The system suggests two options: train or plane. If Alice chooses the train, the system can infer that Alice likes to travel by ground and not by air. The system can therefore learn Alice's preferences from this interaction and suggest the right recommendation next time. In the future, the system can then justify its recommendations to Alice by arguing that travelling by train is the best option for her since she prefers not to fly.
\begin{example}
\label{example:journey}
(Journey Recommendation) Alice uses an online journey recommendation system to find  options for travelling between two cities. The system suggests two options: by ground or air. If Alice chooses ground, the system can infer that Alice likes to travel by train and not by plane. The system can learn Alice's preferences from this interaction and suggest the right recommendation next time. In the future, the system can justify its recommendation to Alice by arguing that travelling by ground is the best option for her since she prefers to travel by train. Furthermore, if the train company or airline states that their trains or planes are ``air-conditioned vehicles", then the system could learn an additional implicit preference, namely that Alice prefers to travel on an ``air-conditioned vehicle".
\end{example}
Preferences not only help in finding the best solutions (via winning arguments) suitable for a particular user but also in justifying such solutions. In turn, computing all possible sets of preferences helps in finding all justifications.
Previous work on the topic~\cite{mahesar_prima18} --- which we build on --- considered only abstract argumentation frameworks (namely Dung's AAFs and PAFs), and this paper extends the state of the art by considering a structured setting, building on ABA and \ABAplus frameworks. Moving to a structured setting is non-trivial and requires additional issues to be addressed that do not arise in the abstract cases.

 The remainder of this paper is structured as follows. In Section~\ref{sec:background}, we present the background on ABA and \ABAplus frameworks. In Section~\ref{sec:approach}, we present our proposed approach and algorithm for computing all possible sets of preferences for a given extension and ABA framework, and we prove the soundness and completeness of the algorithm. In Section~\ref{sec:evaluation}, we present the evaluation and results. In Section~\ref{sec:related-work}, we present the related work. Finally, we conclude and suggest future work in Section~\ref{sec:conclusion}.

\section{Background}
\label{sec:background}
We begin by introducing ABA and \ABAplus, with the following material taken from~\cite{aba_Dung2009,aba_Toni14,CyrasT16}. An ABA framework is a tuple $(\mathcal{L}, \mathcal{R}, \mathcal{A}, ^-)$, where:
\setlist{nolistsep}
\begin{enumerate}[noitemsep]
%\begin{inparaenum}[(1)]
	\item $(\mathcal{L}, \mathcal{R})$ is a deductive system with $\mathcal{L}$ a language (a set of sentences) and $\mathcal{R}$ a set of rules of the form $\varphi_0 \gets \varphi_1,...,\varphi_m$ with $m \geq 0$ and $\varphi_i \in \mathcal{L}$ for $i \in \{0,...,m\}$. %;
	%\begin{inparaenum}[(i)]
		%\item 
		$\varphi_0$ is referred to as the \textit{head} of the rule, and
		%\item 
		$\varphi_1,...,\varphi_m$ is referred to as the \textit{body} of the rule. %;
		%\item 
		If $m=0$, then the rule $\varphi_0 \gets \varphi_1,...,\varphi_m$ is said to have an \textit{empty body}, and is written as $\varphi_0 \gets \top$, where $\top \notin \mathcal{L}$.
%	\end{inparaenum}
	\item $\mathcal{A} \subseteq \mathcal{L}$ is a non-empty set, whose elements are referred to as \textit{assumptions};
	\item $^-: \mathcal{A} \rightarrow \mathcal{L}$ is a total map where for $\alpha \in \mathcal{A}$, the $\mathcal{L}$-sentence $\bar{\alpha}$ is referred to as the \textit{contrary} of $\alpha$.
%\end{inparaenum}
\end{enumerate}

A \textit{deduction}\footnote{We only consider \textit{flat} ABA frameworks, where assumptions cannot be deduced from other assumptions.} for $\varphi \in \mathcal{L}$ supported by $S \subseteq \mathcal{L}$ and $R \subseteq \mathcal{R}$, denoted by $S \vdash_R \varphi$, is a finite tree with:
\begin{inparaenum}[(i)]
	\item the root labelled by $\varphi$,
	\item leaves labelled by $\top$ or elements from $S$,
	\item the children of non-leaf nodes $\psi$ labelled by the elements of the body of some rule from $\mathcal{R}$ with head $\psi$, and $\mathcal{R}$ being the set of all such rules.
\end{inparaenum}
For $E \subseteq \mathcal{L}$, the \textit{conclusions} $Cn(E)$ of $E$ is the set of sentences for which deductions supported by subsets of $E$ exist, i.e., $Cn(E) = \{\phi \in \mathcal{L}: \exists S \vdash_R \phi, S \subseteq E, R \subseteq \mathcal{R} \}$.

%One such requirement is it being \textit{closed} under deduction. For $E \subseteq \mathcal{L}$, the \textit{closure} $Cl(E)$ of $E$ is the set of assumptions that can be deduced from $E$, i.e., $Cl(E) = \{ \alpha \in \mathcal{A}: \exists S \vdash_R \alpha, S \subseteq E, R \subset \mathcal{R} \} = Cn(E) \cap \mathcal{A}$. 
%A set $A \subseteq \mathcal{A}$ of assumptions is \textit{closed} iff $A = Cl(A)$. The ABA framework $(\mathcal{L}, \mathcal{R}, \mathcal{A}, ^-)$ is \textit{closed} iff every $A \subseteq \mathcal{A}$ is closed.
 The semantics of ABA frameworks are defined in terms of sets of assumptions meeting desirable requirements, and are given in terms of a notion of \textit{attack} between sets of assumptions. A set $A \subseteq \mathcal{A}$ \textit{attacks} a set $B \subseteq \mathcal{A}$ of assumptions, denoted by $A \rightsquigarrow B$ iff there is a deduction $A' \vdash_R \stcomp{\beta}$, for some $\beta \in B$, supported by some $A' \subseteq A$ and $R \subseteq \mathcal{R}$. If it is not the case that $A$ attacks $B$, then we denote this by $A \not\rightsquigarrow B$.

For $E \subseteq \mathcal{A}$
%\begin{inparaenum}[(i)]
	%\item 
	$E$ is \textit{conflict-free} iff $E \not\rightsquigarrow E$;
	%\item 
	$E$ \emph{defends} $A \subseteq \mathcal{A}$ iff for all $B \subseteq \mathcal{A}$ with $B \rightsquigarrow A$ it holds that $E \rightsquigarrow B$.
%\end{inparaenum}
A set $E \subseteq \mathcal{A}$ of assumptions (also called an \textit{extension}) is
%\begin{inparaenum}[(i)]
	%\item 
	\textit{admissible} iff $E$ is conflict-free and defends itself;
	%\item 
	\textit{preferred} iff $E$ is $\subseteq$-maximally admissible;
	%\item 
	\textit{stable} iff $E$ is conflict-free and $E \rightsquigarrow \{\beta\}$ for every $\beta \in \mathcal{A} \backslash E$;	
	%\item 
	\textit{complete} iff $E$ is admissible and contains every set of assumptions it defends;
	%\item 
	\textit{grounded} iff $E$ is a $\subseteq$-minimal complete extension.
%\end{inparaenum}

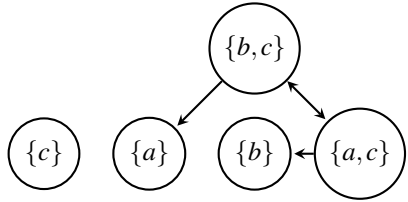
\begin{figure}
	\centering
	\begin{tikzpicture}[
	> = stealth, % arrow head style
	shorten > = 1pt, % don't touch arrow head to node
	auto,
	node distance = 1.4cm, % distance between nodes
	thick % line style
	]
	
	\tikzstyle{every state}=[
	draw = black,
	thick,
	fill = white,
	minimum size = 4mm
	]
	\node[state] (C) {$\{c\}$};
	\node[state] (A) [right of=C] {$\{a\}$};
	\node[state] (B) [right of=A] {$\{b\}$};
	\node[state] (X) [above of=B] {$\{b,c\}$};
	\node[state] (Y) [right of=B] {$\{a,c\}$};
	
	\path[->] (X) edge node {} (A);
	\path[->] (Y) edge node {} (B);
	\path[<->] (X) edge node {} (Y);        
	%\path[->] (Y) edge[bend left] node {} (X);     
	
	\end{tikzpicture}
	\caption{Example assumption graph $\mathcal{G}_1$}
	\label{fig:aba_arg_graph}
\end{figure}

\begin{example}
\label{example:aba}
	Example~\ref{example:journey} can be represented in ABA as follows. Let there be an ABA framework $\mathcal{F} = (\lang, \rules, \assump, \contrary)$ with
	%\begin{inparaenum}[(i)]
		%\item 
		language $\lang = \{ a, b, c, d, e, f \}$; 
		%\item 
		set of rules $\rules = \{ d \leftarrow a,c; \: e \leftarrow b,c \}$; 
		%\item 
		set of assumptions $\assump = \{ a, b, c \}$. 
		%\item 
		Contraries are then given by: $\stcomp{a}=e, \stcomp{b}=d, \stcomp{c}=f$.
	%\end{inparaenum}
	Here $a$ and $b$ stand for ``plane" and ``train" respectively, and $c$ stands for ``air-conditioned vehicle". The contraries of $a$ and $b$, namely $e$ and $d$ stand for ``ground" and ``air", and the contrary of $c$, $f$ stands for ``hot".
	
	$\mathcal{F}$ is illustrated in Figure~\ref{fig:aba_arg_graph}. Here, nodes represent sets of assumptions and directed edges denote attacks. $\{b,c\}$ attacks $\{a\}$, $\{a, c\}$ attacks $\{b\}$, $\{a, c\}$ and $\{b, c\}$ attack each other and $\{c\}$ is un-attacked and does not attack anyone.

	$\mathcal{F}$ has two preferred and stable extensions $\{a,c\}$ and $\{b,c\}$, with conclusions $\mathit{Cn}(\{a,c\}) = \{a, c, d\}$ and $\mathit{Cn}(\{b,c\}) = \{b, c, e\}$, respectively. $\mathcal{F}$ has a unique grounded extension $\{c\}$, with conclusions $\mathit{Cn}(\{c\}) = \{c\}$. Furthermore, all of $\{c\}$, $\{a,c\}$ and $\{b,c\}$ are complete extensions.

\end{example}

\ABAplus is an extension of the ABA framework with a preference ordering $\leq$ on the set $\mathcal{A}$ of assumptions, defined as follows.
\begin{definition}
	An \ABAplus framework is a tuple $(\mathcal{L}, \mathcal{R}, \mathcal{A}, ^-, \leq)$, where $(\mathcal{L}, \mathcal{R}, \mathcal{A}, ^-)$ is an ABA framework and $\leq$ is a pre-ordering defined on $\mathcal{A}$.
\end{definition}

%The notions of conclusions, closure and flatness are applied to \ABAplus frameworks considering their underlying ABA frameworks. 
$<$ is the strict counterpart of $\leq$, and is defined as $\alpha < \beta$ iff $\alpha \leq \beta$ and $\beta \nleq \alpha$, for any $\alpha$ and $\beta$. 
%Also note that, $\leq$ may or may not be a pre-order (a reflexive and transitive binary relation).
The attack relation in \ABAplus is then defined as follows. 
\begin{definition}
	Let $A, B \subseteq \mathcal{A}$,  $A$ $<$-attacks $B$, denoted by $A \rightsquigarrow_< B $ iff either:
	\begin{inparaenum}[(i)]
		\item  there is a deduction $A' \vdash_R \bar{b}$, for some $b \in B$, supported by $A' \subseteq A$, and $\nexists a' \in A'$ with $a' < b$; or
		\item there is a deduction $B' \vdash_R \bar{a}$, for some $a \in A$, supported by $B' \subseteq B$, and $\exists b' \in B'$ with $b' < a$.
	\end{inparaenum}
\end{definition}

The $<$-attack formed in the first  point above is called a \textit{normal attack}, while the $<$-attack formed  in by the second  point  is a \textit{reverse attack}. We write $A \not\rightsquigarrow B$ to denote that $A$ does not $<$-attack $B$. The \textit{reverse attack} ensures that the original conflict is preserved and that conflict-freeness of the extensions is attained, which is the basic condition imposed on acceptability semantics. Flatness is defined as for standard ABA frameworks, and we assume that the ABA and \ABAplus frameworks we deal with are flat.%$A$ is \textit{closed} iff $A = \mathit{Cn}(A) \cap \mathcal{A}$. $\mathcal{F}$ is \textit{flat} iff every $A \subseteq \mathcal{A}$ is closed. We assume ABA and \ABAplus frameworks to be flat, unless specified otherwise. 

Furthermore, for $A \subseteq \mathcal{A}$, $A$ is $<$-\textit{conflict-free} iff $A \not\rightsquigarrow_< A$; also, $A$ $<$-\textit{defends} $A' \subseteq \mathcal{A}$ iff for all $B \subseteq \mathcal{A}$ with  $B \rightsquigarrow_< A'$ it holds that  $A \rightsquigarrow_< B$. In \ABAplus, an extension $E \subseteq \mathcal{A}$ is 
%\begin{inparaenum}[(i)]
	%\item 
	$<$-\textit{admissible} iff $E$ is $<$-\textit{conflict-free} and $<$-\textit{defends} itself;
	%\item 
	$<$-\textit{preferred} iff $E$ is $\subseteq$-maximally $<$-admissible.
	%\item 
	$<$-\textit{stable} iff $E$ is $<$-conflict-free and $E \rightsquigarrow_< \{\beta\}$ for every $\beta \in \mathcal{A} \backslash E$;
	%\item 
	$<$-\textit{complete} iff $E$ is $<$-\textit{admissible} and contains every set of assumptions it $<$-\textit{defends};	
	%\item 
	$<$-\textit{grounded} iff $E$ is a $\subseteq$-minimal $<$-complete set of assumptions.
%\end{inparaenum}
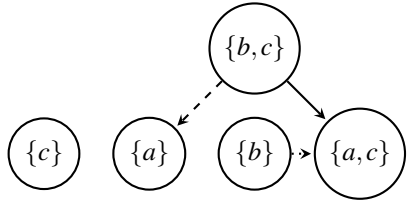
\begin{figure}
	\centering
	\begin{tikzpicture}[
	> = stealth, % arrow head style
	shorten > = 1pt, % don't touch arrow head to node
	auto,
	node distance = 1.4cm, % distance between nodes
	thick % line style
	]
	
	\tikzstyle{every state}=[
	draw = black,
	thick,
	fill = white,
	minimum size = 4mm
	]
	\node[state] (C) {$\{c\}$};
	\node[state] (A) [right of=C] {$\{a\}$};
	\node[state] (B) [right of=A] {$\{b\}$};
	\node[state] (X) [above of=B] {$\{b,c\}$};
	\node[state] (Y) [right of=B] {$\{a,c\}$};
	
	\path[dashed,->] (X) edge node {} (A);
	\path[dotted,->] (B) edge node {} (Y);
	\path[->] (X) edge node {} (Y);        
	%\path[->] (Y) edge[bend left] node {} (X);     
	
	\end{tikzpicture}
	\caption{Example assumption graph $\mathcal{G}_2$}
	\label{fig:aba_arg_graph2}
\end{figure}
\begin{example}
	We extend the ABA framework of Example~\ref{example:aba} to \ABAplus by including preferences over assumptions. Let there be an \ABAplus framework $\mathcal{F^+} = (\lang, \rules, \assump, \contrary, \leq)$ with:
	%\begin{inparaenum}[(i)]
		%\item 
		$\lang = \{ a, b, c, d, e, f \}$;
		%\item 
		$\rules = \{ d \leftarrow a,c; \: e \leftarrow b,c \}$;
		%\item 
		$\assump = \{ a, b, c \}$;
		%\item 
		$\stcomp{a}=e, \stcomp{b}=d, \stcomp{c}=f$.
		%\item 
		$a < b$ (i.e., $<$ is a pre-order with $a \leq b, b \not\leq a$). 
	%\end{inparaenum}
Here $a<b$ represents the preference that Alice prefers to travel by train compared to plane. $\mathcal{F^+}$ is graphically represented in Figure~\ref{fig:aba_arg_graph2} where the nodes contain sets of assumptions and directed edges denote attacks. Dashed arrows indicate normal attacks, dotted arrows indicate reverse attacks and solid arrows indicate $<$-attacks that are both normal and reverse.

In $\mathcal{F^+}$, $\{b,c\}$ supports an argument for the contrary $e$ of $a$, and no assumption in $\{b,c\}$ is strictly less preferred than $a$. Thus, $\{b,c\}$ $<$-attacks $\{a\}$, as well as any set containing $a$ via normal attack. On the other hand, $\{a,c\}$ (supporting an argument for the contrary $d$ of $b$) is prevented from $<$-attacking $\{b\}$, due to the preference $a<b$. Instead $\{b\}$, as well as any set containing $b$, $<$-attacks $\{a,c\}$ via reverse attack. Overall, $\mathcal{F^+}$ has a unique $<$-$\sigma$ extension (where $\sigma \in \{\mathit{grounded, preferred, stable, complete}\})$, namely $E= \{b,c\}$, with conclusions $\mathit{Cn}(E) = \{b,c,e\}$.

\end{example}

\section{Approach and Algorithms for Preference Elicitation}
%\section{Preference Elicitation}
\label{sec:approach}
In this section, we present an extension-based approach for preference elicitation within an ABA framework. The core idea is that by observing what conclusions a reasoner draws based on their knowledge base, we can reason about what preferences over assumptions they  hold. Thus, given an ABA framework, a semantics $\sigma$, and a set of assumptions that are considered justified, we identify the preferences necessary to obtain these assumptions within an \ABAplus $<$-$\sigma$ extension. 
% In a slight abuse of notation, we also refer to the set of assumptions as an extension of the ABA framework.

For any two assumptions $a$ and $b$ in an ABA framework, we use the strict preference relation $a<b$ to denote that $a$ is strictly less preferred to $b$, i.e., $a$ is of lesser strength than $b$, and we use the preference relation $a=b$ to denote that $a$ and $b$ are of equal strength or preference. There are then three possible cases for which the preferences between assumptions are computed for a given conflict-free extension $E$ in an ABA framework.
\smallskip

\noindent\textbf{Case 1:} Suppose $a \in \mathcal{A}$, $\beta \subseteq \mathcal{A}$ and $a \in E$, $\nexists b \in \beta \: s.t. \: b \in E$ such that $a$ is attacked by $\beta$, i.e., $\beta \vdash_R \bar{a}$, where $R \subseteq \mathcal{R}$ and $a$ is not defended by any unattacked assumptions other than $a$ in the extension. 
% Then, $\forall b \in \beta$ and $\forall a \in \mathcal{A}$, we must have at least one preference of the form $b < a$. \no{Last line, perhaps change to: 
Then $\forall b \in \beta$, it is the case that $b \leq a$ (i.e., $b < a$ or $b = a$, and $a \not < b$) and $\exists b' \in \beta$ such that $b' < a$.
% (we consider all possibilities with this).

For example, let $\beta = \{b_1, b_2\}$ and $\beta \vdash_R \bar{a}$ be given, then we have the following set of sets of all possible preferences: $\{ \{b_1<a, b_2=a \}, \{b_1=a, b_2<a \}, \{b_1<a, b_2<a \} \}$. This ensures that there is at least one assumption in $\beta$ that is strictly less preferred than the assumption $a$ whose contrary $\beta$ supports deductively, and $a$ is not strictly less preferred than any assumption in $\beta$. 
% \no{I'm a bit worried about the last part of this sentence, which says that $\forall b \in \beta$, $a \not < b$ which doesn't seem to follow from the first paragraph? Unless the change previously suggested is made?}
\smallskip

\noindent \textbf{Case 2:} Suppose $\alpha \subseteq \mathcal{A}$, $b \in \mathcal{A}$ and $\exists a \in \alpha \; s.t. \; a \in E$, $b \notin E$, and suppose $\alpha$ attack $b$, i.e., $\alpha \vdash_R \bar{b}$, where $R \subseteq \mathcal{R}$ and $b$ does not attack $\alpha$. 
Then $\forall a \in \alpha$, it is the case that $b \leq a$ (i.e., $b < a$ or $b = a$, and $a \not < b $)
% Then, $\forall \alpha \in \mathcal{A}$ and $\forall b \in \mathcal{A}$, we have the following preferences: $(b < a) \vee (b= a)$, i.e., $\nexists a \in \alpha$ such that $a < b$. \no{change last sentence to "Then $\forall a \in \alpha$, $b \leq a$ (i.e., $b < a$ or $b = a$)"?}
% The preferences for each $a$ and $b$ are only computed in Case $2$ if they are not already computed in Case $1$.
% The preferences for each $a_i$ and $b$ are only computed in Case $2$ if they are not already computed in Case $1$, where $0=<i< |\alpha|$.
% iff $b < a$ is not present in Case $1$ preferences. 

For example, let $\alpha = \{a_1, a_2\}$ and $\alpha \vdash_R \bar{b}$ be given, then we have the following set of sets of all possible preferences: $\{ \{b<a_1, b=a_2 \}, \{b=a_1, b<a_2 \}, \{b<a_1, b<a_2 \}, \{b=a_1, b=a_2 \} \}$. This ensures that there is not a single assumption in $\alpha$ that is strictly less preferred then the assumption $b$ whose contrary $\alpha$ deductively supports. 
\smallskip

\noindent\textbf{Case 3:} Suppose $a \in \mathcal{A}$, $\beta, \gamma \subseteq \mathcal{A}$ and $a \in E$, $\exists c \in \gamma \; s.t. \; c \in E$, $\nexists b \in \beta \: s.t. \: b \in E$ where $a$, $b$ and $c$ are different assumptions, such that, $a$ is attacked by $\beta$, i.e., $\beta \vdash_R \bar{a}$ but defended by any unattacked set of assumptions $\gamma$, i.e., $\gamma \vdash_R \bar{b}$, where $b \in \beta$. 
Then $\forall b \in \beta$, it is the case that $b \leq a$ and $a \leq b$ (i.e., $(b < a)$ or $(b = a)$ or $(a < b)$). 
% \no{Maybe something like "Then we may have any combination of preferences between $a$ and any $b \in \beta$". The examples below could then be compressed into a table (if my understanding is correct). The table could look something like
% \\
% \begin{tabular}{|cc|cc|cc|}
% \hline
% $a,b_1$     & $a,b_2$ & $a,b_1$ & $a,b_2$ & $a,b_1$ & $a,b_2$ \\
% \hline
% \hline
%  $<$ & $<$ & $<$ & $=$ &  $<$ & $>$ \\  
%  \hline
%  $=$ & $<$ & $=$ & $=$ & $=$ & $>$ \\
%  \hline
%  $>$ & $<$ & $>$ & $=$ & $>$ & $>$ \\
%  \hline
% \end{tabular}
% \\ 
% where each cell represents a valid combination of preferences. Comments in next part of the example reflect this.
% }
% The preferences for each $b$ and $a$ are only computed in Case $3$ if they are not already computed in Cases $1$-$2$.
% iff $b < a$ or $b = a$ are not present in Case $1$ or Case $2$ preferences. 

For example, let  $\beta = \{b_1, b_2\}$ and $\beta \vdash_R \bar{a}$ be given, and assume $\gamma \vdash_R \bar{b}$ for some $b \in \beta$. Then we have three possible set of sets of preferences: 
% \no{Why can't we have $\{a < b_1, b_2 < a\}$? Isn't the attack from $\beta$ defeated anyway so we can have any combination of preferences here?}
%\begin{enumerate}
\begin{inparaenum}[(i)]
\item $\{ \{b_1<a, b_2=a \}, \{b_1=a, b_2<a \}, \{b_1<a, b_2<a \}  \}$, i.e., there is at least one assumption in $\beta$ that is strictly less preferred then the assumption $a$ whose contrary $\beta$ supports deductively. 
% \no{This sentence suggests that --- for example --- we could also have $\{b_1 < a,a < b_2\}$ in here?}
\item $\{ \{b_1=a, b_2=a \}\}$, i.e., all assumptions in $\beta$ are equal to the assumption $a$ whose contrary $\beta$ supports deductively.
\item $\{ \{a<b_1, a=b_2 \}, \{a=b_1, a<b_2 \}, \{a<b_1, a<b_2 \} \}$, i.e., assumption $a$ whose contrary $\beta$ supports deductively is strictly less preferred to at least one assumption in $\beta$. 
% \no{Does the last sentence not admit $\{a < b_1, b_2 < a\}$ for example?}
\end{inparaenum}
%\end{enumerate}
The final set of sets of preferences is the union of the above three set of sets of preferences given by $\{ \{b_1<a, b_2=a \}, \{b_1=a, b_2<a \}, \{b_1<a, b_2<a \}, \{b_1=a, b_2=a \}, \{a<b_1, a=b_2 \}, \{a=b_1, a<b_2 \}, \{a<b_1, a<b_2 \} \}$ 
% \no{This is a bit unclear, is this para for Case 3 (I think so), or all cases? Suggested rewrite: "The final set of preferences is the cartesian product of the set of sets of preferences obtained by each case, filtering out any inconsistent sets of preferences. For the example above, this is shown in Table X". That's if we want to refer to example as I think it would take up a huge amount of space, so I suggest not bothering and ending just before "For the example above".}

%This ensures that there is at least one assumption in $\beta$ that is strictly less preferred then contrary of the assumption in $\alpha$ that $\beta$ supports.
Algorithm~\ref{alg:computing-preferences} exhaustively computes all possible sets of preferences over assumptions for a given input extension (a conflict-free set of assumptions) in an ABA framework using the above three cases. The input of Algorithm $1$ consists of:
\begin{inparaenum}[(i)]
	\item ABA framework.
	\item Extension $E$ which is a conflict-free set of assumptions such that $E \subseteq \mathcal{A}$, where $\mathcal{A}$ is the set of assumptions in an ABA framework. 
\end{inparaenum}
The algorithm computes and outputs a set consisting of finite sets of possible preferences between assumptions, where each set of preferences is represented as $\mathit{Prefs} = \lbrace a < b, b = c, .... \rbrace$ such that $\lbrace a,b,c \rbrace \subseteq \mathcal{A}$. 

\begin{algorithm}[!h]
	\begin{algorithmic}[1]
		\Require $\mathit{ABA}$, an ABA framework
		\Require $E$, an extension (conflict-free set of assumptions)
		\Ensure $\mathit{PSet}$, the set of sets of all possible preferences
		\Function{ComputeAllPreferences}{$\mathit{ABA},E$}
		\State $\mathit{PSet}\gets\textsc{ComputePreferences}_1(\mathit{ABA},E)$
		\State $\mathit{PSet}\gets\textsc{ComputePreferences}_2(\mathit{ABA},E,\mathit{PSet})$
		\State $\mathit{PSet}\gets\textsc{ComputePreferences}_3(\mathit{ABA},E,\mathit{PSet})$
		\State \Return $\mathit{PSet}$
		\EndFunction
	\end{algorithmic}
	\caption{Compute all preferences} %\setlength{\AlCapSkip}{1em}}
	\label{alg:computing-preferences}
\end{algorithm}
Algorithms ~\ref{alg:computing-preferences-case1},~\ref{alg:computing-preferences-case2},~\ref{alg:computing-preferences-case3} compute case $1$, case $2$ and case $3$ preferences respectively. The following are the main steps in Algorithm~\ref{alg:computing-preferences-case1}:
\begin{inparaenum}[(i)]
	\item Line $3$: Iteratively pick a single assumption $a$ from the extension $E$.
	\item Line $4$: Find all attacking assumptions $B$ that support contrary of assumption $a$.
	\item Lines $5-24$: For each $B$, if there is no unattacked assumption $C \neq a$ that attacks $B$, then compute all possibilities where at least a single $b \in B$ is less preferred to $a$.
\end{inparaenum}

The following are the main steps in Algorithm~\ref{alg:computing-preferences-case2}:
\begin{inparaenum}[(i)]
	\item Line $3$: Iteratively pick a single assumption $a$ from the extension $E$.
	\item Line $4$: Find assumption $b$ whose contrary $A$ supports, where $a \in A$.
	\item Lines $5-10$: For all assumptions $b$ attacked by $a$, compute all possibilities where $b$ is either less preferred to $a$ or equal to $a$. Combine these preferences with case $1$ sets of preferences by adding each such preference to each preference set of case $1$.
\end{inparaenum}

The following are the main steps in Algorithm~\ref{alg:computing-preferences-case3}:
\begin{inparaenum}[(i)]
	\item Line $3$: Iteratively pick a single assumption $a$ from the extension $E$.
	\item Line $4$: Find all attacking assumptions $B$ that support contrary of the assumption $a$.
	\item Lines $5-14$: For each $B$, if there is an unattacked assumption $C \neq a$ that attacks $B$, then compute all possibilities (different sets) where:
	\begin{inparaenum}[(I)]
	\item At least a single $b \in B$ is less preferred to $a$ and ensure that $a$ cannot be less preferred to $b$ in these sets of preferences
	\item $a$ is less preferred to at least a single $b \in B$ and ensure that $b$ cannot be less preferred to $a$ in these sets of preferences
	\item For all $b \in B$, $b$ is equal to $a$.
	\end{inparaenum}
\end{inparaenum}
\begin{algorithm}[!h]
	\begin{algorithmic}[1]
		\Require $\mathit{ABA}$, an ABA framework
		\Require $E$, an extension (conflict-free set of assumptions)
		\Ensure $\mathit{PSet''}$, a set of sets of preferences		
		\Function{ComputePreferences$_1$}{$\mathit{ABA},E$}
		\State $\mathit{PSet}\gets \{ \{ \} \}$, $\mathit{PSet'}\gets\emptyset$,		
		$\mathit{PSet''}\gets\emptyset$, $\mathit{PSet'''}\gets\emptyset$
		%\ForAll{$A \in E$}  
		\ForAll{$a \in E$}  
		%		\State $A'\gets \{a\;|\; a \in A, \; B' \vdash \bar{a} \in\mathcal{R}\}$
		%		\State $\mathit{Attackers}\gets\{B\;|\; \; B \vdash_R \bar{a}, \; R \subseteq \mathcal{R}\}$ 
		%		\State $\mathit{Defenders}\gets\{c\;|\; c \subseteq C \; s.t. \; \forall c_i \in c \; c_i \neq a, \; c_i \in E, \; C \vdash_R \bar{b}, \;  R \subseteq \mathcal{R},  \; b \in B\}$ 
		\State $\mathit{Attackers} \gets \{B\;|\; \exists y \in Y : B \in y, B \notin E,  \; Y \vdash_R \bar{a}, \; R \subseteq \mathcal{R}\}$ 
		\ForAll{$B\in\mathit{Attackers}$}
		\State $\mathit{Defenders} \gets \{C\;|\; \exists x \in X: C \in x, C \neq a, C \in E, \; X \vdash_R \bar{b}, \; b \in B,$
		
		$\nexists z \in Z: D \in z, \; D \notin E \; s.t. \; Z \vdash_R \bar{C}, \; R \subseteq \mathcal{R}\}$ 
		
		\If{$\mathit{Defenders}=\emptyset$} 	
		\ForAll{$b \in B$}
		
		\If{$(a \neq b) \; \textit{and} \; (\nexists \mathit{Prefs''} \in \mathit{PSet''} \; s.t. \; (b<a) \in \mathit{Prefs''} \; \mathit{or} \;$ 
		
		$(b=a) \in \mathit{Prefs''} \; \mathit{or} \; (a<b) \in \mathit{Prefs''})$}

		\ForAll{$\mathit{Prefs}\in\mathit{PSet}$} 
		\State $\mathit{PSet'} \gets \mathit{PSet'} \cup \{\mathit{Prefs} \cup \{b < a\}\}$ 	
		\State $\mathit{PSet'} \gets \mathit{PSet'} \cup \{\mathit{Prefs} \cup \{b = a\}\}$ 	
		\EndFor

		\State $\mathit{PSet} \gets \mathit{PSet'}$
		\State $\mathit{PSet'} \gets \emptyset$
		
		\EndIf
		
		\EndFor
		
		\If{$\exists \mathit{Prefs} \in \mathit{PSet} \; s.t. \; \mathit{Prefs}$ has all equal relations}
		\State $\mathit{PSet} \gets \mathit{PSet}\setminus \{\mathit{Prefs}\}$
		\EndIf
		
		\If{$\mathit{PSet''} = \emptyset$}
		\State $\mathit{PSet'''} \gets \mathit{PSet}$
		\Else
		\ForAll{$\mathit{Prefs} \in \mathit{PSet}$}
		\ForAll{$\mathit{Prefs''} \in \mathit{PSet''}$}
		\State $\mathit{PSet'''} \gets \mathit{PSet'''} \cup \{ \mathit{Prefs} \cup \mathit{Prefs''}\}$
		\EndFor
		\EndFor
		\EndIf
		
		\State $\mathit{PSet''} \gets \mathit{PSet'''}$
		\State $\mathit{PSet'''}\gets\emptyset$, $\mathit{PSet}\gets \{ \{ \} \}$
		\EndIf
		\EndFor
		\EndFor
		%\EndFor
		
		\State \Return $\mathit{PSet''}$
		\EndFunction
	\end{algorithmic}
	\caption{Compute preferences (Case 1)}
	\label{alg:computing-preferences-case1}	
\end{algorithm}
\begin{algorithm}[!h]
	\begin{algorithmic}[1]
		\Require $\mathit{ABA}$, an ABA framework
		\Require $E$, an extension (conflict-free set of assumptions)
		\Require $\mathit{PSet}$, a set of sets of preferences
		\Ensure $\mathit{PSet}$, an updated set of sets of preferences		
		\Function{ComputePreferences$_2$}{$\mathit{ABA}, E, \mathit{PSet}$}
		\State $\mathit{PSet'}\gets\emptyset$		
		\ForAll{$a \in E$}  
		\State $\mathit{Attacked} \gets \{ b \; | \; b \in \mathcal{A}, \; a \in A \; s.t. \; A \vdash_R \bar{b}, \; R \subseteq \mathcal{R} \}$ 
		\ForAll{$b \in \mathit{Attacked}$} 
		\If{$(a \neq b) \; \textit{and} \; (\nexists \mathit{Prefs} \in \mathit{PSet} \; s.t. \; (b<a) \in \mathit{Prefs} \; \mathit{or} \;$ 
		
		$(b=a) \in \mathit{Prefs} \; \mathit{or} \; (a<b) \in \mathit{Prefs})$} 
		\ForAll{$\mathit{Prefs}\in\mathit{PSet}$} 
		\State $\mathit{PSet'} \gets \mathit{PSet'} \cup \{\mathit{Prefs} \cup \{b < a\}\}$ 
		\State $\mathit{PSet'} \gets \mathit{PSet'} \cup \{\mathit{Prefs} \cup \{b = a\}\}$ 
		\EndFor
		\State $\mathit{PSet} \gets \mathit{PSet'}$, $\mathit{PSet'}\gets\emptyset$
		\EndIf
		\EndFor
		\EndFor
		\State \Return $\mathit{PSet}$
		\EndFunction
	\end{algorithmic}
	\caption{Compute preferences (Case 2)}
	\label{alg:computing-preferences-case2}	
\end{algorithm}
\begin{algorithm}[!h]
	\begin{algorithmic}[1]
		\Require $\mathit{ABA}$, an ABA framework
		\Require $E$, an extension (conflict-free set of assumptions)
		\Require $\mathit{PSet}$, a set of sets of preferences
		\Ensure $\mathit{PSet}$, an updated set of sets of preferences			
		\Function{ComputePreferences$_3$}{$\mathit{ABA},E,\mathit{PSet}$}
		\State $\mathit{PSet'}\gets\emptyset$	
		\ForAll{$a \in E$}  
		\State $\mathit{Attackers} \gets \{B\;|\; \exists y \in Y : B \in y, B \notin E,  \; Y \vdash_R \bar{a}, \; R \subseteq \mathcal{R}\}$ 
		\ForAll{$B\in\mathit{Attackers}$}
% 		\State $\mathit{Defenders} \gets \{C\;|\; \exists x \in X: C \in x, C \neq a, C \in E, \; X \vdash_R \bar{b}, \; R \subseteq \mathcal{R},  \; b \in B\}$ 
		\State $\mathit{Defenders} \gets \{C\;|\; \exists x \in X: C \in x, C \neq a, C \in E, \; X \vdash_R \bar{b}, \; b \in B,$
		
		$\nexists z \in Z: D \in z, \; D \notin E \; s.t. \; Z \vdash_R \bar{C}, \; R \subseteq \mathcal{R}\}$ 
		\If{$\mathit{Defenders} \neq \emptyset$} 
		\ForAll{$b \in B$}
		\If{$(a \neq b) \; \textit{and} \; (\nexists \mathit{Prefs} \in \mathit{PSet} \; s.t. \; (b<a) \in \mathit{Prefs} \; \mathit{or} \;$ 
		
		$(b=a) \in \mathit{Prefs} \; \mathit{or} \; (a<b) \in \mathit{Prefs})$} 
		\ForAll{$\mathit{Prefs}\in\mathit{PSet}$} 
		\State $\mathit{PSet'} \gets \mathit{PSet'} \cup \{\mathit{Prefs} \cup \{b < a\}\}$ 
		\State $\mathit{PSet'} \gets \mathit{PSet'} \cup \{\mathit{Prefs} \cup \{b = a\}\}$ 
		\State $\mathit{PSet'} \gets \mathit{PSet'} \cup \{\mathit{Prefs} \cup \{a < b\}\}$ 
		\EndFor
		\State $\mathit{PSet} \gets \mathit{PSet'}$, $\mathit{PSet'}\gets\emptyset$
		
		\EndIf
		\EndFor
			
		\EndIf
		\EndFor
		\EndFor
		
		\State \Return $\mathit{PSet}$
		\EndFunction
	\end{algorithmic}
	\caption{Compute preferences (Case 3)}
	\label{alg:computing-preferences-case3}	
\end{algorithm}

We establish that our approach is sound (i.e., all its outputs are correct) and complete (i.e., it outputs all possible solutions with the three cases).
%~\footnote{Proofs appear in a separate annex, reference removed for anonymous review.} 
\begin{theorem}
	Algorithm~\ref{alg:computing-preferences} is sound in that given an ABA framework and an extension $E$ (under a given semantic $\sigma$) as input, every output preference set $\mathit{Prefs}\in\mathit{PSet}$ when used with the ABA framework, i.e., \ABAplus framework, results in the input $E$ (under a given $<$-$\sigma$ semantic).
\end{theorem}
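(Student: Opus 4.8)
The plan is to fix an arbitrary preference set $\mathit{Prefs}\in\mathit{PSet}$ produced by Algorithm~\ref{alg:computing-preferences}, let $\leq$ be the pre-order it induces (with strict part $<$), and form the \ABAplus framework $\mathcal{F}^{+}=(\lang,\rules,\assump,{}^{-},\leq)$; the goal is to show $E$ is a $<$-$\sigma$ extension of $\mathcal{F}^{+}$. First I would record the invariants that the three sub-algorithms guarantee for any output $\mathit{Prefs}$: (a) every added relation has exactly one endpoint in $E$, and only Case~3 can ever place an $E$-assumption on the \emph{weaker} side of a strict relation; (b) if $a\in E$ is ABA-attacked by a minimal support $S$ having no unattacked defender in $E$, then Case~1 forces $b\leq a$ for all $b\in S$ and $b^{*}<a$ for some $b^{*}\in S$; (c) if $b\notin E$ and a support containing an assumption of $E$ deduces $\bar b$, then Case~2 forces $b\leq a$ for every $a$ in that support; and (d) the guard conditions keep $\leq$ consistent, so $\rightsquigarrow_<$ is well defined. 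I would also use throughout that $E$, being a $\sigma$-extension for $\sigma\in\{$admissible, preferred, stable, complete, grounded$\}$, is in particular conflict-free and admissible in the ABA framework.

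The first step, $<$-conflict-freeness of $E$, is immediate: by the definition of $\rightsquigarrow_<$, any normal or reverse $<$-attack of $E$ on itself requires a deduction $S\vdash_R\bar x$ with $x\in E$ and $S\subseteq E$, which is exactly an ABA-attack $E\rightsquigarrow E$, contradicting that $E$ is conflict-free.

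The core step is that $E$ $<$-defends itself, i.e.\ $B\rightsquigarrow_< E$ implies $E\rightsquigarrow_< B$. I would first observe that it suffices to produce some $b\in B$ with $E\rightsquigarrow_<\{b\}$, since this already entails $E\rightsquigarrow_< B$. I then split on how $B$ $<$-attacks $E$. If the attack is \emph{normal}, there is a minimal support $S\subseteq B$ with $S\vdash_R\bar a$, $a\in E$, and no $s\in S$ with $s<a$; then $S$ is one of the attackers of $a$ processed by Algorithms~\ref{alg:computing-preferences-case1} and~\ref{alg:computing-preferences-case3}. By invariant (b), if $S$ had no unattacked defender in $E$ then some $b^{*}\in S$ would have $b^{*}<a$, contradicting normality; hence $S$ has an unattacked defender $C\in E$, and I would lift the ABA-witness of $C\rightsquigarrow S$ (taken inside $E$ using admissibility) to a normal $<$-attack $E\rightsquigarrow_<\{b\}$ on the relevant $b\in S$, using that $C$ is unattacked and hence never the weak side of any preference. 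If the attack is \emph{reverse}, there is $b\in B\setminus E$ and a support $S\subseteq E$ with $S\vdash_R\bar b$; by invariant (c) the relation between $b$ and the elements of $S$ was fixed so that none of them is strictly below $b$, so $S\vdash_R\bar b$ is itself a normal $<$-attack $E\rightsquigarrow_<\{b\}\subseteq B$. The delicate point --- and what I expect to be the main obstacle --- is precisely this lifting inside each sub-case: one must check that no \emph{other} preference (in particular no ``$a<b$'' introduced by Case~3 for some other $E$-assumption occurring in the chosen support) silently blocks the counter-attack, which forces a joint analysis of the three interleaved, order-sensitive sub-algorithms rather than treating the cases in isolation.

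Finally I would promote $<$-admissibility to $<$-$\sigma$. For $\sigma=$ stable: since $E$ is ABA-stable, every $\beta\in\assump\setminus E$ has a support $S\subseteq E$ with $S\vdash_R\bar\beta$, and invariant (c) gives $a\not<\beta$ for all $a\in S$, so $S$ normal-$<$-attacks $\{\beta\}$; with $<$-conflict-freeness this is $<$-stability. For $\sigma=$ complete I would show that every assumption $<$-defended by $E$ is already ABA-defended by $E$ --- a $<$-attack on a singleton always lifts to an ABA-attack $E$ must ABA-counter, and the ABA-counter lifts back to a $<$-counter by the previous paragraph --- so ABA-completeness forces $<$-defended assumptions into $E$. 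The cases $\sigma\in\{$grounded, preferred$\}$ then reduce to re-deriving $\subseteq$-minimality (resp.\ $\subseteq$-maximality) in the $<$-setting, by arguing that on subsets of $E$ the $<$-complete (resp.\ $<$-admissible) sets coincide with the ABA-complete (resp.\ ABA-admissible) ones, so minimality (resp.\ maximality) transfers from the ABA framework. I anticipate the grounded and preferred cases, together with the cross-case bookkeeping in the self-defense step, to be where the argument needs the most care.
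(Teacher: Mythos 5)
Your proposal is correct in outline but takes a genuinely different, and considerably more demanding, route than the paper. The paper's proof never engages with the definitions of the $<$-$\sigma$ semantics at all: it simply walks through Algorithms~\ref{alg:computing-preferences-case1}--\ref{alg:computing-preferences-case3}, restates the shape of the preferences each one emits (at least one $b<a$ against an undefended attacker, $b\leq a$ for attacked non-members, all three options when a defender exists), and asserts in each case that ``$a \in E$ with respect to each of these preferences'' because the offending attack is invalidated or reversed. It does not separately establish $<$-conflict-freeness, does not formulate or discharge the $<$-defense obligation, and does not distinguish between the stable, complete, grounded and preferred cases. Your decomposition --- fix an output $\mathit{Prefs}$, build the induced pre-order, prove $<$-conflict-freeness from ABA conflict-freeness, prove $<$-admissibility by splitting on normal versus reverse $<$-attacks and lifting the ABA counter-attacks, then promote to each semantics separately --- is the argument one would actually need to certify the theorem as stated, and your conflict-freeness step and the reduction of $E\rightsquigarrow_< B$ to $E\rightsquigarrow_<\{b\}$ for a single $b\in B$ are correct as written. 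What your approach buys is precision about exactly where the claim could fail; what it costs is that the two places you flag as ``needing the most care'' (verifying that no Case-3 relation $a<b$ with $a\in E$ sabotages the lifted counter-attack, and transferring $\subseteq$-minimality/maximality for the grounded and preferred semantics) are left as identified obstacles rather than closed. Be aware that the paper gives you no help there: it silently assumes those steps go through, so you cannot ``import'' them --- if you want a complete proof you must still do that bookkeeping yourself, and it is precisely the part of the theorem that the published argument does not actually establish.
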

\begin{proof}
We prove this by exploring all cases and how these are handled by Algorithms $2$-$4$.
Each set of preferences computed for each subset of assumptions $\alpha, \beta, \gamma \subseteq \mathcal{A}$ is such that $\alpha, \gamma \subseteq E, \beta\cap E = \emptyset$. We proceed to show how each of the auxiliary Algorithms $2$-$4$ help us achieve this.

Algorithm~\ref{alg:computing-preferences-case1} computing each case $1$ preference set with at least one preference of the form $b<a, a\in A, A \subseteq \alpha, \forall b \in B, B \subseteq \beta, B \vdash_R \bar{a}$ ensures that the following holds:
\begin{inparaenum}[(1)]
	\item There is no unattacked $C \in \gamma$ such that $\exists c \in C, c\neq a$, $C \vdash_R \bar{b}$ where $b \in B$ (lines $6$-$7$).
	\item $a \in E$ since $a$ is preferred to at least one of its attacking assumption $b$ in $B$, which invalidates the attack $B \vdash_R \bar{a}$.
	\item Since the input extension $E$ consists of conflict free assumptions, if $a\in E$ then its attacking assumption $b\not\in E$ where $b \in B$. This supports that $\beta\cap E=\emptyset$.
\end{inparaenum}

Algorithm~\ref{alg:computing-preferences-case2} computing each case $2$ preferences of the form $b<a, b=a, 
\forall a\in A, A \subseteq \alpha, b \in B, B \subseteq \beta, A \vdash_R \bar{b}, B \not\vdash_R \bar{a}$ ensures the following holds:
%a\in E, B\in\beta, (A,B)\in\mathcal{R}, (B,A)\not\in\mathcal{R}$ 
\begin{inparaenum}[(1)]
	\item Since $A$ attacks $B$ and $B$ does not attack $A$, we have two different preferences between $a$ and $b$, namely, $b < a, b = a$. Therefore $a\in E$ with respect to each of these preferences.
	\item Preferences $b < a, b = a$ will be in different preference sets, as per lines 8 and 9. 
	We will have $\mathit{Prefs}_1\gets\mathit{Prefs}\cup \{b < a\}$ and
	$\mathit{Prefs}_2\gets\mathit{Prefs}\cup \{b = a\}$, where $\mathit{Prefs}$ consists of preferences of case $1$.
\end{inparaenum}

Algorithm~\ref{alg:computing-preferences-case3} computing each case $3$ preferences of the form $b<a, b=a, a<b, a \in A, A \subseteq \alpha, \forall b \in B, B \subseteq \beta, C \subseteq \gamma$ such that $C$ is unattacked, $B \vdash_R \bar{a}, C \vdash_R \bar{b}$ ensures the following holds:
\begin{inparaenum}[(1)]
	\item  Since $C$ defends $A$ from the attack of $B$, we have three different preferences between $a$ and $b$, namely, $b<a$, $b=a$ and $a<b$. Therefore $a \in E$ with respect to each of these preferences. 
	\item Preferences $b < a, b = a, a < b$ will be in different preference sets, as per lines $11$-$13$. 
	We will have $\mathit{Prefs}_1\gets\mathit{Prefs}\cup \{b < a\}$, $\mathit{Prefs}_2\gets\mathit{Prefs}\cup \{b = a\}$ and $\mathit{Prefs}_3\gets\mathit{Prefs}\cup \{a < b\}$, where $\mathit{Prefs}$ consists of preferences of cases $1$ and $2$.
\end{inparaenum}
\end{proof}

\begin{theorem}
	Algorithm~\ref{alg:computing-preferences} is complete in that given an ABA framework and an extension $E$ (under a given semantic $\sigma$) as input, if there is a preference set $\mathit{Prefs}\in\mathit{PSet}$ which when used with the ABA framework, i.e., \ABAplus framework, results in the input $E$ (under a given $<$-$\sigma$ semantic), then Algorithm~\ref{alg:computing-preferences} will find it. 
\end{theorem}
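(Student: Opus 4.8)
The plan is to run the natural converse of the soundness argument: fix an arbitrary preference ordering $<$ (equivalently, a consistent preference set $\mathit{Prefs}$ over $\mathcal{A}$) such that $E$ is a $<$-$\sigma$ extension of the \ABAplus framework it induces, and show that $\mathit{Prefs}$ — restricted to the assumption pairs that Algorithms~\ref{alg:computing-preferences-case1}--\ref{alg:computing-preferences-case3} ever constrain — is one of the sets that \textsc{ComputeAllPreferences} places in $\mathit{PSet}$. The first reduction I would make is that these pairs are exactly those $(a,b)$ with $a\in E$ and $b$ occurring in the support $B$ of some deduction $B\vdash_R\bar a$, together with those $(a,b)$ with $a\in E$, $b\notin E$ and $a$ in the support of some deduction of $\bar b$; the relative strength of every other pair is irrelevant to whether $E$ is $<$-admissible, $<$-stable, and so on, so it suffices to match $\mathit{Prefs}$ on those pairs.

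Next I would dispatch $<$-conflict-freeness: since $E$ is conflict-free in the underlying ABA framework there is no deduction $E'\vdash_R\bar e$ with $E'\cup\{e\}\subseteq E$, and this kills both the normal and the reverse clause of $\rightsquigarrow_<$ internal to $E$, so $E$ is $<$-conflict-free for \emph{every} choice of $<$. Hence the only genuine constraints $\mathit{Prefs}$ must satisfy come from $E$ being required to $<$-defend itself (for $\sigma\in\{\textit{admissible},\textit{preferred},\textit{complete},\textit{grounded}\}$) plus, for $\sigma=\textit{stable}$, the condition $E\rightsquigarrow_<\{\beta\}$ for all $\beta\notin E$; the extremality/closure clauses of each $\sigma$ are inherited from the hypothesis that $\mathit{Prefs}$ already works and impose no further restriction.

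The heart of the argument is a case analysis over each attack on and by $E$. Consider a deduction $B\vdash_R\bar a$ with $a\in E$. For $E$ to $<$-defend $a$, either (i) no unattacked $C\subseteq E$ with $C\neq\{a\}$ already $<$-attacks $B$, so $a$ must cancel the attack itself, which by the definition of $\rightsquigarrow_<$ forces some $b'\in B$ with $b'<a$ (so that $\{a\}$ reverse-attacks $B$ while the normal attack $B\rightsquigarrow_<\{a\}$ is simultaneously suppressed) and no $b\in B$ with $a<b$ — precisely the family enumerated by Algorithm~\ref{alg:computing-preferences-case1}; or (ii) such an unattacked defender $C$ exists, in which case the pair $(a,b)$, $b\in B$, is left free and $\mathit{Prefs}$ restricted to it is one of $b<a$, $b=a$, $a<b$, which are exactly the three options Algorithm~\ref{alg:computing-preferences-case3} produces, its side conditions guaranteeing the union stays a consistent strict order. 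Symmetrically, for a deduction $A\vdash_R\bar b$ with $a\in A\cap E$, $b\notin E$ and $b$ not attacking $A$, keeping $b$ out of $E$ (and, for stability, $<$-attacked by $E$) forces $a\not<b$, i.e. $b<a$ or $b=a$ — exactly Algorithm~\ref{alg:computing-preferences-case2}, composed with the Case~1 sets as the code does. Since the nested \textbf{for all} loops of Algorithms~\ref{alg:computing-preferences-case1}--\ref{alg:computing-preferences-case3} construct the Cartesian product of these per-attack option sets, and $\mathit{Prefs}$ (restricted as above) selects one option from each attack, $\mathit{Prefs}$ appears in the returned $\mathit{PSet}$, and by the preceding soundness theorem it indeed yields $E$ as a $<$-$\sigma$ extension.

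The step I expect to be the main obstacle is establishing that Cases~1--3 are genuinely exhaustive, i.e. that $E$ cannot be rescued by any route more subtle than ``$a$ reverse-attacks its own attacker'' or ``an unattacked defender neutralises the attacker'': for instance a longer defence chain whose intermediate links are acceptable only because of preferences, or a would-be defender $C$ that is unattacked in plain ABA yet becomes reverse-attacked once $<$ is added. Ruling this out needs a careful argument that every such indirect defence reduces to one of the two configurations above and that the ``unattacked'' test on $C$ used in Algorithms~\ref{alg:computing-preferences-case1} and~\ref{alg:computing-preferences-case3} is exactly the correct one. Tightly coupled to this are the two auxiliary claims the product reading rests on: that preferences outside the algorithm's scope never matter for the acceptability of $E$, and that the resolutions of distinct attacks are logically independent, so that no local choice forced on one attack is forbidden by the resolution of another.
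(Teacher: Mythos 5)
Your proposal follows essentially the same route as the paper's proof: a case analysis over the three attack configurations (attacker with no unattacked defender, attacks by $E$ on outside assumptions, attacker with an unattacked defender), arguing that the algorithms exhaustively enumerate every admissible resolution of each attack so that any preference set yielding $E$ must appear among the combinations produced. Your write-up is in fact considerably more explicit than the paper's — which simply asserts that Algorithms~\ref{alg:computing-preferences-case1}--\ref{alg:computing-preferences-case3} "exhaustively search" each case — and the obstacles you flag at the end (exhaustiveness of the three cases, independence of per-attack choices, irrelevance of unconstrained pairs) are precisely the points the paper's proof leaves implicit rather than discharges.
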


\begin{proof}
	
Similar to above, we prove this by exploring all cases and how these are handled by algorithms $2$-$4$. We find all sets of preferences computed for each subset of assumptions $\alpha, \beta, \gamma \subseteq \mathcal{A}$ such that $\alpha, \gamma \subseteq E, \beta\cap E = \emptyset$. We proceed to show how each of the auxiliary Algorithms $2$-$4$ help us achieve this.

Algorithm~\ref{alg:computing-preferences-case1} computes all case $1$ preference sets with at least one preference of the form $b<a, \: a\in A, \: A \subseteq \alpha, \: \forall b \in B, \: B \subseteq \beta, \: B \vdash_R \bar{a}$. Lines $3$-$24$ exhaustively search for $a \in E$ for which there is an attacker $B$ (not attacked by any unattacked $C$ such that $\nexists c \in C, \: c \neq a$). If there are such $a, b\in\mathcal{A}$ where $b \in B$, the algorithm will find them and add at least one preference of the form $b < a$ to a set of preferences.

Algorithm~\ref{alg:computing-preferences-case2} computes all case $2$ preferences of the form $b<a, \: b=a, \: \forall a\in A, \: A \subseteq \alpha, \: b \in B, \: B \subseteq \beta, \: A \vdash_R \bar{b}, \: B \not\vdash_R \bar{a}$.
Lines $3$-$10$ exhaustively search for $a \in E$ for which there is an attacked set of assumptions $B$ and $B$ does not attack $A$ where $a \in A$. If there are such $a, b\in\mathcal{A}$ where $b \in B$, the algorithm will find them and add each $b<a, \: b=a$ to a different set of preferences.

Algorithm~\ref{alg:computing-preferences-case3} computes all case $3$ preferences of the form $b<a, \: b=a, \: a<b, \: a \in A, \: A \subseteq \alpha, \: \forall b \in B, \: B \subseteq \beta, \: C \subseteq \gamma$ such that $C$ is unattacked, $B \vdash_R \bar{a}, \: C \vdash_R \bar{b}$. Lines $3$-$14$ exhaustively search for $a \in E$ for which there is an attacker $B$ and there is a defender $C$ that attacks $B$. If there are such $a, b\in\mathcal{A}$ where $b \in B$, the algorithm will find them and add each $b<a, b=a, a<b$ to a different set of preferences.
\end{proof}

\noindent Given an ABA framework and an extension $E$ (under a given semantic $\sigma$), the complexity (worst-case) of finding all possible sets of preferences between assumptions is exponential due to the exponential complexity of the constituent functions of Algorithm~\ref{alg:computing-preferences}.

% Given an ABA framework and an extension $E$ (under a given semantic) as input, the complexity (worst-case) of finding all possible sets of preferences which  will be in an ABA$^+$ framework whose extension is $E$ under a given $<$-semantics is exponential due to the exponential complexity of the constituent functions of Algorithm \ref{alg:computing-preferences}.

%\section{Illustrative Example and Complexity Results}
\section{Evaluation and Results}
\label{sec:evaluation}
% An illustrative example to demonstrate how preferences are computed for the three cases given in Section~\ref{sec:approach} by Algorithm~\ref{alg:computing-preferences} using the auxiliary Algorithms~\ref{alg:computing-preferences-case1},~\ref{alg:computing-preferences-case2},~\ref{alg:computing-preferences-case3}	in a given ABA framework is as follows.
In this section, we present an illustrative example to demonstrate and evaluate Algorithm~\ref{alg:computing-preferences}. 
% \begin{example}
% 	\label{ex:worked_preferences}
% 	Let there be an ABA framework $\mathcal{F} = (\lang, \rules, \assump, \contrary)$ with:
% 	\begin{inparaenum}[(i)]
% 		\item language $\lang = \{ a, b, c, d, e, f \}$,
% 		\item set of rules $\rules = \{ d \leftarrow a,c; \: e \leftarrow b,c \}$,
% 		\item set of assumptions $\assump = \{ a, b, c \}$,
% 		\item contraries are given by: $\stcomp{a}=e, \stcomp{b}=d, \stcomp{c}=f$.
% 	\end{inparaenum}
	Let there be an ABA framework $\mathcal{F}$ of Example~\ref{example:aba} and its corresponding assumption graph $\mathcal{G}_1$ shown in Figure~\ref{fig:aba_arg_graph}. 
% 	where the nodes hold the sets of assumptions and the direct edges denote attacks. $\{b,c\}$ attacks $\{a\}$, $\{a, c\}$ attacks $\{b\}$, $\{a, c\}$ and $\{b, c\}$ attack each other and $\{c\}$ is un-attacked and does not attack anyone. 
$\mathcal{F}$ has two preferred extensions $E_1 =\{a,c\}$, $E_2 =\{b,c\}$.
To demonstrate Algorithm~\ref{alg:computing-preferences} we consider the preferred extension $E_1 =\{a,c\}$ for computing preferences~\footnote{Due to space restrictions we only demonstrate this on preferred extensions, but the approach works on all conflict-free extensions.}. Table~\ref{table:extension2-demonstration} shows the preferences computed as follows:
	\begin{inparaenum}[(i)]
		\item At line $2$, Algorithm~\ref{alg:computing-preferences-case1} is called, which returns the sets of case $1$ preferences. 
		\item At line $3$, Algorithm~\ref{alg:computing-preferences-case2} is called, which returns the sets of preferences with cases $1$ and $2$ combined together.
		\item Finally at line $4$, Algorithm~\ref{alg:computing-preferences-case3} is called, which returns the sets of preferences with cases $1$, $2$ and $3$ combined together.
	\end{inparaenum}

	\begin{table}[!h]
	\begin{minipage}[b]{0.4\linewidth}
	\centering\renewcommand\arraystretch{1.2}
		\caption{Computing Preferences for Extension $E_1 =\{a,c\}$}
		\label{table:extension2-demonstration}			
% 		\scalebox{0.8}{
% 		\ttabbox
		\hspace*{-10em}
		\begin{tabular}{|c|p{25mm}|}
			\hline
			Line No.& Preference Sets\\\hline
			$2$ & $\{b<a, c=a\}$ \newline $\{b=a, c<a\}$ \newline $\{b<a, c<a\}$\\ \hline
			$3$ & $\{b<a, c=a, b<c\} \newline \{b<a, c=a, b=c\} \newline \{b=a, c<a, b<c\} \newline \{b=a, c<a, b=c\}
			\newline \{b<a, c<a, b<c\}
			\newline \{b<a, c<a, b=c\}$
			\\ \hline
			$4$ & 
			$\{b<a, c=a, b<c\} \newline \{b<a, c=a, b=c\} \newline \{b=a, c<a, b<c\} \newline \{b=a, c<a, b=c\}
			\newline \{b<a, c<a, b<c\}
		\newline \{b<a, c<a, b=c\}$
			\\
			\hline
		\end{tabular}
		\end{minipage}
	\begin{minipage}[b]{0.4\linewidth}		
	\centering\renewcommand\arraystretch{1.2}
% 	\scalebox{0.8}{
% 		\ttabbox
		\caption{Preferences for the Preferred extensions $\{a,c\}$ and $\{b,c\}$}
	\label{table:extensions-preferences}
	\hspace*{-6em}
	\begin{tabular}{|p{12mm}|p{27mm}|p{10mm}|p{10mm}|}
		\hline
		Extensions & Preference Sets & Unique & Common  \\ \hline
		$\{a,c\}$ &  $\{b<a, c=a, b<c\} \newline \{b<a, c=a, b=c\} \newline \{b=a, c<a, b<c\} \newline \{b=a, c<a, b=c\}
		\newline \{b<a, c<a, b<c\}
		\newline \{b<a, c<a, b=c\}$
		& $b<a$\newline $c<a$ \newline $b<c$ & $b=a$ \newline $b=c$ \newline $c=a$
		\\
		\cline{1-3}
		$\{b,c\}$ & 	
        $\{a<b, c=b, a<c\} \newline \{a<b, c=b, a=c\} \newline \{a=b, c<b, a<c\} \newline \{a=b, c<b, a=c\} \newline \{a<b, c<b, a<c \} \newline \{a<b, c<b, a=c \}$		
% 		$\{a<b, c=b, a=c\} \newline \{a=b, c<b, a<c\} \newline \{a=b, c<b, a=c\} \newline \{a<b, c<b, a<c\}
% 		\newline \{a<b, c<b, a=c\}$
		& $a<b$ \newline $c<b$ \newline $a<c$ &
		\\
		\hline
	\end{tabular}
	\end{minipage}
	\end{table}
	\raggedbottom
\begin{algorithm}[H]
	\caption{Algorithm for Computing Unique Preferences}
	\label{alg:computing-preferences-unique}
	\begin{algorithmic}[1]
		\Require $\mathit{PrefSet_1}$, set of preference sets for extension $E_1$.
		\Require $\mathit{PrefSets}$, set consisting of the set of preference sets for all other given extensions except $E_1$.
		\Ensure $\mathit{UniquePrefs}$, unique preferences for $E_1$.
		\Function{ComputeUniquePreferences}{$\mathit{PrefSet_1},\mathit{PrefSets}$}
		\ForAll{$\mathit{Prefs_1} \in \mathit{PrefSet_1}$} 
		\ForAll{$p \in \mathit{Prefs_1}$} 
% 		\If{$\nexists \mathit{Prefs_2} \in \mathit{PrefSet_2} \; s.t. \; p \in \mathit{Prefs_2}$}
% 		\If{$\nexists \mathit{PrefSet} \in \mathit{PrefSets} \; s.t. \; \mathit{Prefs} \in \mathit{PrefSet} \; \& \; p \in \mathit{Prefs}$}	
		\If{$\forall \mathit{PrefSet} \in \mathit{PrefSets} \;$ 
		
		$\nexists \mathit{Prefs} \in \mathit{PrefSet} \; s.t. \; p \in \mathit{Prefs}$}
		\State $\mathit{UniquePrefs} \gets \mathit{UniquePrefs} \cup p$
		\EndIf
		\EndFor	
		\EndFor
		\Return $\mathit{UniquePrefs}$ 
		\EndFunction
	\end{algorithmic}
\end{algorithm}

Table~\ref{table:extensions-preferences} presents the sets of preferences for the two preferred extensions $\{a,c\}$ and $\{b,c\}$, and the unique and common preferences for an extension in comparison to another extension. The unique preferences for an extension in comparison to all other extensions for a given semantic can be computed by Algorithm~\ref{alg:computing-preferences-unique}. Furthermore, the common preferences for an extension in comparison to all other extensions can be computed by Algorithm~\ref{alg:computing-preferences-common}.

% Since, at least one unique preference for each extension is in its corresponding preference set, therefore it can be concluded that if we evaluate the example ABA framework given in Figure~\ref{fig:aba_arg_graph} with a corresponding preference set of a given preferred extension, then the evaluation results in exactly the same preferred extension. 
An important reason for finding all possible preferences is that, in a multi-extension semantic, if there are more than two extensions, e.g., $E_1, E_2, E_3$, then there might be a preference $\mathit{Pref_1}$ that is unique for $E_1$ in comparison to the preferences of $E_2$, and a preference $\mathit{Pref_2}$ that is unique for $E_1$ in comparison to the preferences of $E_3$, and the set of preferences $\{\mathit{Pref_1}, \mathit{Pref_2}\}$ would still result in the extension $E_1$ and not $E_2$ or $E_3$.
\begin{algorithm}[H]
	\caption{Algorithm for Computing Common Preferences}
	\label{alg:computing-preferences-common}	
	\begin{algorithmic}[1]
		\Require $\mathit{PrefSet_1}$, set of preference sets for extension $E_1$.
		\Require $\mathit{PrefSets}$, set consisting of the set of preference sets for all other given extensions except $E_1$.
		\Ensure $\mathit{CommonPrefs}$, common preferences for $E_1$.
		\Function{ComputeCommonPreferences}{$\mathit{PrefSet_1},\mathit{PrefSets}$}
		\ForAll{$\mathit{Prefs_1} \in \mathit{PrefSet_1}$} 
		\ForAll{$p \in \mathit{Prefs_1}$} 
% 		\If{$\exists \mathit{Prefs_2} \in \mathit{PrefSet_2} \; s.t. \; p \in \mathit{Prefs_2}$}
		\If{$\forall \mathit{PrefSet} \in \mathit{PrefSets} \;$ 
		
		$\exists \mathit{Prefs} \in \mathit{PrefSet} \; s.t. \; p \in \mathit{Prefs}$}		
		\State $\mathit{CommonPrefs} \gets \mathit{CommonPrefs} \cup p$
		\EndIf
		\EndFor	
		\EndFor

		\Return $\mathit{CommonPrefs}$ 
		\EndFunction
	\end{algorithmic}
\end{algorithm}

\section{Related Work}
\label{sec:related-work}
% A preference-based argumentation framework (PAF)~\cite{AMGOUD2014585} has been previously studied to represent an abstract argumentation framework~\cite{KaciT08}. Following this, 
~\cite{mahesar_prima18} propose an extension-based approach for computing preferences for a given set of conflict-free arguments in an abstract argumentation framework. However, the preferences are computed over abstract arguments which are atomic and the algorithms do no not take into account structural components of arguments such as assumptions, contraries and inference rules in the context of structured argumentation formalisms in particular ABA and ABA\textsuperscript{+}.

Value-based argumentation framework (VAF)~\cite{bench-capon03-persuasion,Bench-CaponDD07,AiriauBEMR17} extends a standard argumentation framework to take into account values and aspirations to allow divergent opinions for different audiences. Furthermore, value-based argumentation frameworks (VAFs) have been extended to take into account the possibility that arguments may support multiple values, and therefore, various types of preferences over values could be considered in order to deal with real world situations~\cite{KaciT08}.
% \cite{AiriauBEMR17} investigated what preferences different audiences within VAFs hold to either reach agreement or comply with constraints on the VAF's extension.

\ABAplus~\cite{CyrasT16} generalises preference-based argumentation framework (PAF)~\cite{AMGOUD2014585} and improves assumption-based argumentation equipped with preferences (p\_ABA)~\cite{Wakaki14}. ASPIC\textsuperscript{+}~\cite{ModgilP14} encompasses many key elements of structured argumentation such as strict and defeasible rules, general contrariness mapping and various forms of attacks as well as preferences. Another variation is an extended argumentation framework (EAF)~\cite{MODGIL2009901} that considers the case where arguments can express preferences between other arguments. While the above structured argumentation frameworks allow handling of preferences over argument components, the main limitation is that preferences need to be stated in advance. 

\section{Conclusions and Future Work}
\label{sec:conclusion}
In this paper, we presented a novel solution for an important problem of preference elicitation in structured argumentation. 
% which seems to have never been attempted before. 
% We  described an approach to compute preferences in a structured argumentation formalism, i.e., ABA. We presented an algorithm that takes an ABA framework and a set of conflict-free assumptions (extension) as input and computes all possible sets of preferences over assumptions that are valid for the acceptability of the input extension. 
The main contributions of our work are as follows:
\begin{inparaenum}[(i)]
    \item While previous work considered eliciting preferences in abstract argumentation, in this work, we focused on eliciting preferences in structured argumentation, i.e., ABA. Hence, the preferences are computed over assumptions rather than abstract arguments.
    \item We presented a novel algorithm for the structured setting in ABA. We established, its soundness, completeness and complexity (worst-case).
    \item We presented algorithms for finding unique and common preferences of an extension in comparison to the preferences of all other extensions for a given multi-extension semantic.
\end{inparaenum}

% As future work, we plan to do an empirical evaluation of our proposed work on concrete examples. 
Currently, we provide a mechanism for filtering preferences via Algorithms~\ref{alg:computing-preferences-unique}-~\ref{alg:computing-preferences-common} by finding unique and common preferences, we aim to extend this in the future to identify useful patterns or combinations of preferences such as the ones specified in Section~\ref{sec:evaluation}. 
Moreover, it will be interesting to find out what case of preferences are more likely to be unique or common for a multi-extension semantic. 

Furthermore, we would like to extend the approach presented in this paper to other structured argumentation frameworks~\cite{Besnard2014-BESITS}, in particular ASPIC\textsuperscript{+}~\cite{ModgilP14}.
% , by extending the cases to distinguish between rebutting and undermining attacks. 
We intend to investigate the relationship between extension enforcement~\cite{ecai-Baumann12,CMKMM2015} and our work. As one application, our work could be used in dialogue strategies~\cite{ki-Thimm14,ijcai-RienstraTO13} where an agent can have the capability of inferring preferences and reach her goal if she enforces at least one of several desired sets of arguments with the application of preferences. 
% Moreover, she may figure out which set of desired arguments is the closest one to being accepted, i.e, the one that requires minimal change in the argumentation framework.
% , for instance, the one having minimum number of unique preferences in comparison to others.

% \smallskip
% \noindent \textbf{Acknowledgments:} This  work  was supported by EPSRC grant EP/P011829/1, \emph{Supporting Security Policy with Effective Digital Intervention (SSPEDI)}.

%\bibliographystyle{ijcai20}
 \bibliographystyle{abbrv}
\bibliography{comma20}
\end{document}